\DeclareRobustCommand{\bigO}{%
  \text{\usefont{OMS}{cmsy}{m}{n}O}%
}
\DeclarePairedDelimiter{\norm}{\lVert}{\rVert}
\NewDocumentCommand{\normL}{ s O{} m }{%
  \IfBooleanTF{#1}{\norm*{#3}}{\norm[#2]{#3}}_{2}%
}
\newtheorem{definition}{Definition}
\newtheorem{theorem}{Theorem}
\newtheorem{lemma}{Lemma}
\newtheorem{proposition}{Proposition}
\newcommand{\longset}[2]{(#1_{1}, \ldots, #1_{#2})}
\newcommand{\abbrev}[1]{\textbf{#1}}
\newcommand{\lps}[2]{L_{\mathbb{R}^{#2}}^{#1}}
\newcommand{\expect}[1]{\mathbb{E}\left[#1\right]}
\newcommand{\xlambda}{X^{\lambda}}
\newcommand{\xlambdaq}[1]{X^{\Gamma_{#1},\lambda}}
\newcommand{\xlambdaqn}{\xlambdaq{N}}
\newcommand{\rd}{\mathbb{R}^d}
\newcommand{\rk}{\mathbb{R}^K}
\newcommand{\olambda}{\lambda^{*}}
\newcommand{\olambdaq}{\lambda_{q}^{*}}
\newcommand{\qelbo}[1]{\widehat{\mathcal{L}}^N_{\acrshort*{oq}}(#1)}
\newcommand{\qelbolambda}{\widehat{\mathcal{L}}^N_{\acrshort*{oq}}(\lambda)}
\newcommand{\qelbonlambda}[1]{\widehat{\mathcal{L}}^{#1}_{\acrshort*{oq}}(\lambda)}
\newcommand{\elbo}[1]{\mathcal{L}(#1)}
\newcommand{\elbolambda}{\mathcal{L}(\lambda)}
\newcommand{\elboolambdaq}{\elbo{\olambdaq}}
\newcommand{\elboolambda}{\elbo{\olambda}}
\newcommand{\qelboolambda}{\qelbo{\olambda}}
\newcommand{\qelboolambdaq}{\qelbo{\olambdaq}}
\newcommand{\mcalgo}{%
\begin{algorithm}[H]
\SetAlgoLined
\KwIn{$y$, $p(x,z)$, $q_{\lambda_0}$.}
\KwResult{Optimal \acrshort*{vi} parameters $\lambda^*$.}
 \While{not converged}{
  Sample $\longset{X^{\lambda_k}}{N} \sim q_{\lambda_k}$ \; 
  Compute $\widehat{{g}}^{N}_{\acrshort*{mc}}\left(\lambda_{k}\right)=\frac{1}{N}  \sum_{i=1}^{N}  \nabla_{\lambda} H(X^{\lambda_k})$ \;
  $\lambda_{k+1}=\lambda_{k} - \alpha_{k} \widehat{{g}}^{N}_{\acrshort*{mc}}\left(\lambda_{k}\right)$ \;
 }
\caption{Monte Carlo Variational Inference.}
\label{algomc}
\end{algorithm}
}
\newcommand{\qalgo}{%
\begin{algorithm}[H]
\SetAlgoLined
\KwIn{$y$, $p(x,z)$, $q_{\lambda_{0}}$.}
\KwResult{Optimal Quantized \acrshort*{vi} parameters $\lambda^*_{q}$.}
 \While{not converged}{
  Get $\longset{X^{\Gamma_N,\lambda_k}}{N} \sim q_{\lambda_k}$, $\longset{w^{k}}{N}$ \; 
  Compute $\widehat{g}^{N}_{\acrshort*{oq}}\left(\lambda_{k}\right)=\nabla_{\lambda} \sum_{i=1}^{N} w^{k}_{i} H(X^{\Gamma_N,\lambda_k}_i)$ \;
  $\lambda_{k+1}=\lambda_{k} - \alpha_{k} \widehat{{g}}^{N}_{\acrshort*{oq}}\left(\lambda_{k}\right)$ \;
 }
\caption{Quantized Variational Inference.}
\label{algoq}
\end{algorithm}
}
\newacronym{vq}{VQ}{Vector Quantization}
\newacronym{vi}{VI}{Variational Inference}
\newacronym{qvi}{QVI}{Quantized Variational Inference}
\newacronym{rqvi}{RQVI}{Richardson Quantized Variational Inference}
\newacronym{bbvi}{BBVI}{Black Box Variational Inference}
\newacronym{mcmc}{MCMC}{Monte Carlo Markov Chain}
\newacronym{hmc}{HMC}{Hamilton Monte Carlo}
\newacronym{qmc}{QMC}{Quasi Monte Carlo}
\newacronym{rqmc}{RQMC}{Randomized Quasi Monte Carlo}
\newacronym{mc}{MC}{Monte Carlo}
\newacronym{elbo}{ELBO}{Evidence Lower Bound}
\newacronym{kl}{KL}{Kullback–Leibler}
\newacronym{gs}{GS}{Gibbs Sampling}
\newacronym{mse}{MSE}{Mean Squared Error}
\newacronym{oq}{OQ}{Optimal Quantization}
\newacronym{avi}{AVI}{Automatic Variational Inference}
\newacronym{blr}{BLR}{Bayesian Linear Regression}
\newacronym{glm}{GLM}{Poisson Generalized Linear Model}
\newacronym{sgd}{SGD}{Stochastic Gradient Descent}
\newacronym{ae}{AE}{Absolute Error}
\newacronym{mcvi}{MCVI}{Monte Carlo Variational Inference}
\newacronym{qmcvi}{QMCVI}{Quasi Monte Carlo Variational Inference}
\newacronym{rqmcvi}{RQMCVI}{Randomized Quasi Monte Carlo Variational Inference}
\newacronym{bnn}{BNN}{Bayesian Neural Network}
\newacronym{blgm}{BLGM}{Bayesian Linear Gaussian Model}
\newacronym{hpv}{HPV}{Hessian Vector Product}
\newacronym{mlp}{MLP}{Multi Layer Perceptron}
\author{Amir Dib}
\date{Septembre 2020}
\title{Quantized Variational Inference}
\begin{document}

\author{%
 Amir Dib \\
  Université Paris-Saclay, CNRS, ENS Paris-Saclay, Centre Borelli, SNCF, ITNOVEM.\\
  91190, Gif-sur-Yvette, France \\
  \texttt{amir.dib@ens-paris-saclay.fr} 
}

\maketitle

\begin{abstract}
We present Quantized Variational Inference, a new algorithm for Evidence Lower Bound maximization. We show how Optimal Voronoi Tesselation produces variance free gradients for \acrfull*{elbo} optimization at the cost of introducing asymptotically decaying bias. Subsequently, we propose a Richardson extrapolation type method to improve the asymptotic bound. We show that using the Quantized Variational Inference framework leads to fast convergence for both score function and the reparametrized gradient estimator at a comparable computational cost. Finally, we propose several experiments to assess the performance of our method and its limitations.
\end{abstract}
\section{Introduction}
\label{sec:org433a40b}
Given data \(y\) and latent variables \(z\), we consider a model \(p(y,z)\) representing our view of the studied phenomenon through the choice of \(p(y|z)\) and \(p(z)\).  The goal of the Bayesian statistician is to find the best latent variable that fits the data, hence the likelihood \(p(z|y)\). These quantities are linked by the bayes formula which gives that \(p(z|y)=\frac{p(z)p(y|z)}{p(y)}\) where \(p(y)\) is the prior predictive distribution (also named marginal distribution or normalizing factor) which is a constant. Given a variational distribution \(q_{\lambda}\), the following decomposition can be obtained \cite{saulMeanFieldTheory1996}
\begin{equation}
\log p(y)=\underbrace{\underset{z \sim q_{\lambda}}{\mathbb{E}}\left[\log \frac{p(z, y)}{q_{\lambda}(z)}\right]}_{\operatorname{ELBO}(\lambda)}+\underbrace{\operatorname{KL}\left(q_{\lambda}(z) \| p(z | y)\right)}_{\text {KL-divergence }}.
\label{elbo}
\end{equation}
It follows that maximazing the \acrshort*{elbo} with respect to \(q_\lambda\) leads to find the best approximation of \(p(z|y)\) for the \acrfull*{kl} divergence. Intuitively, this procedure minimizes the information loss subsequent to the replacement of the likelihood by \(q_\lambda\) but other distances can be used \cite{ambrogioniWassersteinVariationalInference2018}.

The reason for the popularity of such techniques is due to the fact that finding a closed-form for \(p(z|y)\) requires to evaluate the prior predictive distribution and thus to integrate over all latent variables which lead to intractable computation (except in the prior conjugate case) even for simple models \cite{gelmanBayesianDataAnalysis2013}. A common approach is to use methods such as \acrlong*{gs}, \acrlong*{mcmc} or \acrlong*{hmc} \cite{betancourtConceptualIntroductionHamiltonian2018,homanNoUturnSamplerAdaptively2014,brooksHandbookMarkovChain2011} which rely solely on the unnormalized posterior distribution (freeing us from the need to compute \(p(y)\)) and the ability to sample from the posterior. These methods are consistent but associated with heavy computation, high sensitivity to hyperparameters and potential slow to converge to the true target distribution. On the other hand, optimization techniques such as \acrfull*{vi} are generally cheaper to compute, tend to converge faster but are often a crude estimate of the true posterior distribution. Recent work proposes to combine these two strategies to allow for an explicit choice between accuracy and computational time \cite{salimansMarkovChainMonte2015a}. \\
Thanks to approaches such as \acrfull*{bbvi} \cite{ranganathBlackBoxVariational2014,kingmaAutoEncodingVariationalBayes2014b} (which opens the possibility of the generic use of \acrshort*{vi}), \acrfull*{avi} \cite{kucukelbirAutomaticVariationalInference2015a} and modern computational libraries, \acrlong*{vi} has become one of the most prominent framework for probabilistic inference approximation. \\
Most of these optimization procedures rely on gradient descent optimization over the parameters associated with the variational family and subsequently depending heavily on the \(\ell_{2}(\mathbb{R}^K)\) (with \(K\) the number of variational parameters) norm of the expected gradient \cite{bottouOptimizationMethodsLargeScale2018a,domkeProvableGradientVariance2019}. The bias-variance decomposition gives
\begin{equation}
\mathbb{E} | g |_{\ell_{2}}^{2}=\operatorname{tr} \mathbb{V} g+ | \mathbb{E} g |_{\ell_{2}}^{2}.    
\label{bv-equation}
\end{equation}
Low variance of the gradient estimators allows for taking larger steps in the parameter space and result in faster convergence if the induced bias can be satisfyingly controlled. Several methods have been used to reduce gradient variance such as filtering \cite{NIPS2017_6961,NIPS2017_7268} control variate \cite{NIPS2018_8201} or alternative sampling \cite{tranVariationalBayesIntractable2017,ruizOverdispersedBlackboxVariational2016,buchholzQuasiMonteCarloVariational2018}. 

In real-world applications, one would test a large combination of models and hyperparameters associated with multiple preprocessing procedures. A common practice for bayesian modeling on large datasets consists of using \acrshort*{vi} for model selection before resorting to asymptotically exact sampling methods. More generally, \acrshort*{vi} is typically the first step towards more complex and demanding sampling. In this work we propose to give more importance to parsimonious computation than accuracy.
Our approach is based upon embracing the fundamental bias in resorting to \acrshort*{vi} approach and finding the best variance free estimator which produces the fastest gradient descent. This work proposes to use \acrfull*{oq} (also called Optimal Voronoi Tesselation, see \cite{grafFoundationsQuantizationProbability2007} for an historical view) in place for the variational distribution. Given a finite subset \(\Gamma_N\) of \(\mathbb{R}^d\), the optimal quantizer at level \(N\) of a random variable \(Z \in L_{\mathbb{R}^{d}}^{p}(\Omega, \mathcal{A}, \mathbb{P})\) on a probability space \((\Omega, \mathcal{A}, \mathbb{P})\) is defined as the closest finite probability measure on \(\Gamma_N\) for the \(L_{\mathbb{R}^{d}}^{p}(\Omega, \mathcal{A}, \mathbb{P})\) distance. Hence, it is by construction the best finite approximation of size \(N\) in the \(\lps{p}{d}\) sense. Recent works have shown that, given a regularity term \(\alpha\), the \acrlong*{ae} error induced by such quantization is in \(\bigO(\mathrm{N}^{-\frac{1+\alpha}{d}})\) \cite{lemaireNewWeakError2019,pagesNumericalProbabilityIntroduction2018}. 

\paragraph{Contribution.} We show that: \textbf{i)} thanks to invariance under translation and scaling our method can be applied to a large class of variational family at similar computational cost; \textbf{ii)} even though biased our estimation is lower than the true lower bound under some assumptions with know theoretical bounds, making it relevant for quick evaluation of model; \textbf{iii)} our approach leads to competitive bias-variance trade.

\paragraph{Organisation of the paper.}
Section \ref{sec:mqvi} introduces the idea of using Optimal Quantization for \acrshort*{vi} and shows how it can be considered as the optimal choice among variance free gradients. Section \ref{sec:experiments} is devoted to the practical evaluation of these methods and show their benefits and limitations. Due to space restrictions, all theoretical proofs and derivations are in the supplementary materials.
\begin{figure}[!t]
\centering
\includegraphics[width=.9\linewidth]{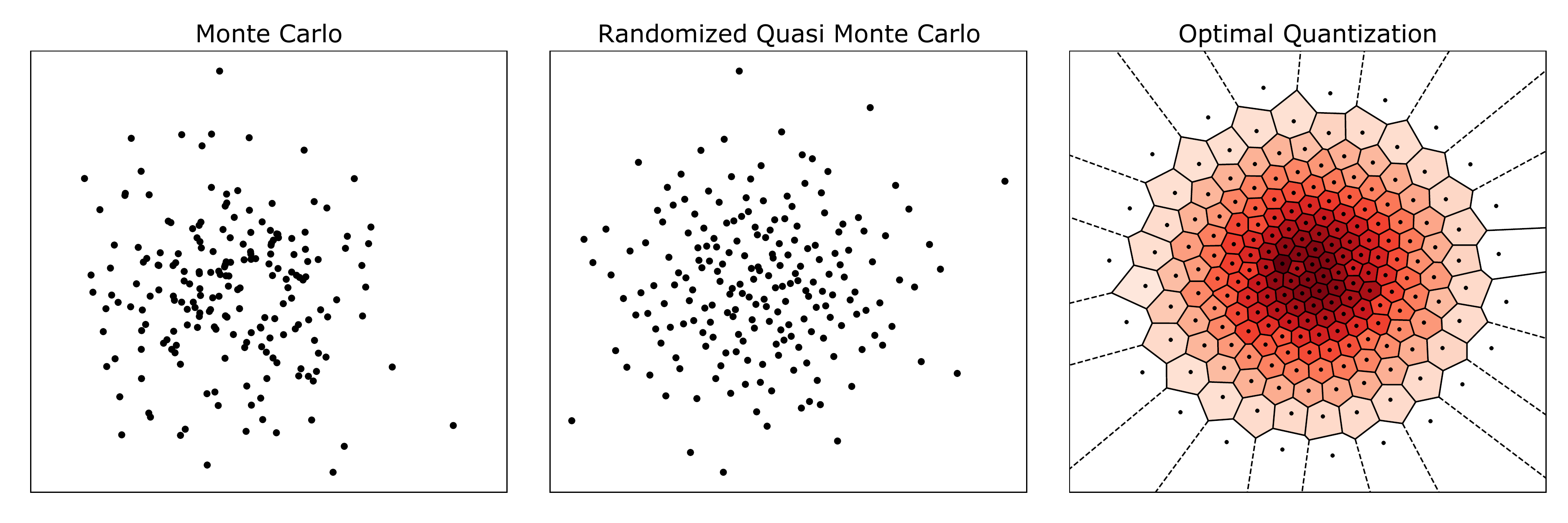}
\caption{\label{fig:ps-res}
Monte Carlo (left), Randomized Monte Carlo (center) and Optimal Quantization with the associated Voronoi Cells (right), for a sampling size \(N=200\) of the bivariate normal distribution \(\mathcal{N}\left(0, I_{2}\right)\).}
\end{figure}
\newpage
\section{Quantized Variational inference}
\label{sec:orgde4c444}
\label{sec:mqvi}
In this Section, we present \acrlong*{qvi}. We review traditional \acrlong*{mcvi} in \ref{sec:vi}. Details of our algorithm along with theoretical results are presented in \ref{sec:qvi}. Finally, section \ref{sec:rqvi} proposes an implementation of Richardson extrapolation to reduce the produced bias.
\subsection{Variational inference}
\label{sec:org680e159}
\label{sec:vi}
\begin{figure}[t]
\centering
\mcalgo
\end{figure}

Given a parameter family \(\lambda \in \mathbb{R}^{K}\), exact estimation of equation \ref{elbo} is possible in the conjugate distribution case given some models when closed-forms are avalaible \cite{bleiVariationalInferenceReview2017,winnVariationalMessagePassing2005}. Complex or black box models require the use of minimum-search strategy such as \acrfull*{sgd}, provided that a suitable form for the gradient can be found. Expressing \(z\) as a transformation over a random variable \(X \sim q\), which holds all the stochasticity of \(z\), such as \(z= h_\lambda(X)\) allows for derivation under the expectation. In this case, the gradient can be expressed as
\begin{equation}
\begin{array}{l}
\nabla_{\lambda} \mathcal{L}(\lambda) = \mathbb{E}_{X \sim q}[\nabla_{\lambda} (\underbrace{\log p\left(y, h_{\lambda}(X)\right)- \log q\left(h_{\lambda}(X) \right) | \lambda}_{H(X,\lambda)})],
\end{array}
\label{elbograd}
\end{equation}
clearing the way for optimization step since one only needs to compute the gradient for a batch of samples and take the empirical expectation. This is known as the reparametrization trick \cite{NIPS2015_5666} . In the following, \(H(X^{\lambda})\) denotes the stochastic function \(H(X,\lambda)\) with \(\mathcal{L}(\lambda) = \mathbb{E}[H(X^{\lambda})]\) when there is no ambiguity and \(g_{\lambda}(X) = \nabla H(X^{\lambda})\) the stochastic gradient for the \acrshort*{elbo} maximization problem. \\
A typical \acrshort*{mc} procedure at step \(k\) samples from an \abbrev{i.i.d.} sequence \(\longset{X^{\lambda_k}}{N} \sim q_{\lambda_k}\) and computes  \(\widehat{\mathcal{L}}^{N}_{\acrshort*{mc}} (\lambda_k) = \frac{1}{N}\sum_i^N H(X_i^{\lambda_k})\) along with \(\widehat{g}_{\acrshort*{mc}}^{N}(\lambda_k) = \frac{1}{N}\sum_i^N \nabla H(X_i^{\lambda_k})\). Then, \acrshort*{sgd} scheme described in algorithm \ref{algomc} can be used. \\
The convergence of the procedure typically depends on the expectation of the quadratic norm of \(\expect{\widehat{g}^{N}}\) \cite{NIPS2013_4937,domkeProvableGradientVariance2019}. Equation \ref{bv-equation} shows that this method results in an \acrshort*{mse} error of \(\bigO(N^{-1})\) (by the Law Of Large Number) as the estimator is unbiased. Various methods have already been proposed to improve on this rate \cite{NIPS2017_6961,NIPS2017_7268,NIPS2018_8201,tranVariationalBayesIntractable2017,ruizOverdispersedBlackboxVariational2016} . \\
Our work considers the class of variance-free estimator and aims to find the best candidate to improve on this bound, at the cost of introducing a systematic bias in the evaluation which can be reduced using Richardson extrapolation (see section \ref{sec:qvi}).
\subsection{Optimal Quantization}
\label{sec:orgf1ea369}
\label{sec:qvi}
In this section we consider the true \acrshort*{elbo} \(\mathcal{L}(\lambda) = \expect{H(X^{\lambda})}\) and construct an optimal quantizer  \(\xlambdaqn\) of \(\xlambda\) along with an \acrshort*{elbo} estimator \(\qelbolambda = \expect{H(\xlambdaqn)}\), such as \(\normL{\xlambda - \xlambdaqn}\) is minimized.
\begin{definition}
Let $\Gamma_N=\left\{x_{1}, \ldots, x_{N}\right\} \subset \mathbb{R}^{d}$ be a subset of size $N$, $\left(C_{i}(\Gamma)\right)_{i=1, \ldots, N} \subset \mathcal{P}(\mathbb{R}^{d})$ and
\begin{equation}
\forall i \in\{1, \ldots, N\} \quad C_{i}(\Gamma) \subset\left\{\xi \in \mathbb{R}^{d},\left|\xi-x_{i}\right| \leq \min _{j \neq i}\left|\xi-x_{j}\right|\right\},
\end{equation}
then $\left(C_{i}(\Gamma)\right)_{i=1, \ldots, N}$ is a Voronoi partition of $\mathbb{R}^{d}$ associated with the Voronoi Cells $C_i$.
\end{definition}

Let \((\Omega, \mathcal{A}, \mathbb{P})\) be the probability space. For \(\xlambda \in L_{\mathbb{R}^{d}}^{2}(\Omega, \mathcal{A}, \mathbb{P})\), Optimal Quantization aims to find the best \(\Gamma \subset \mathbb{R}^{d}\) of cardinality at most \(N\) in \(L_{\mathbb{R}^{d}}^{2}\). To that end, the optimal quantizer of \(\xlambda\) is defined as the projection onto the closest Voronoi cell induced by \(\Gamma_N\). Formally, if we consider the projection \(\Pi: \mathbb{R}^d \rightarrow \rd\) such as \(\Pi(x) = \sum_{i=1}^{N} x_i \mathds{1}(x)_{C_i(\Gamma)}\), then
\begin{equation}
\xlambdaqn = \Pi(\xlambda).
\end{equation}
The quantizer \(\Gamma_N^{ * } = \longset{x}{N}\)  of \(\xlambda\) at level \(N\) is quadratically optimal if it minimizes the quadratic error \(\normL{\xlambda - \xlambdaqn} = \expect{\min _{1 \leq i \leq N}\left|\xlambda-x_{i}\right|^{2}}\). The problem can be reformulated as finding the probability measure on the convex subset of probability measure on \(\Gamma_N\) that minimizes the \(L_{\mathbb{R}^{d}}^{2}(\Omega, \mathcal{A}, \mathbb{P})\) Wassertein distance \cite{liuCharacterizationProbabilityDistribution2020} .\\
For illustration, different sampling methods for the bivariate normal distribution \(\mathcal{N}\left(0, I_{2}\right)\) are represented in Figure \ref{fig:ps-res}. It is shown that \acrlong*{rqmc} produces more concentrated samples in the high density regions where \acrlong*{oq} accurately represents the probability distribution. Given a sample from \acrshort*{oq}, the associated weights \(\mathbb{P}\left(X^{\Gamma_{N}, \lambda}=x_{i}\right)\) gives his relative importance (values are displayed in shades of red). 

Given \(N\) and \(\Gamma_N^{ * }\), the error rate of such approximation is controlled by Zador's Theorem \cite{pagesNumericalProbabilityIntroduction2018,pagesOptimalQuadraticQuantization2003,pagesIntroductionVectorQuantization2015}
\begin{equation}
\left\|\xlambda-X^{\Gamma_{N}^{* }, \lambda}\right\|_{2} \leq  \bigO(N^{-\frac{1}{d}}).
\end{equation}
The key property of the optimal quantizer lays in the simplicity of his cubature formula. For every measurable function \(f\) such as \(f(X) \in L_{\mathbb{R}^{d}}^{2}(\Omega, \mathcal{A}, \mathbb{P})\)
\begin{equation}
\expect{f(\xlambdaqn)} =\sum_{i=1}^{N} \mathbb{P}\left(\xlambdaqn=x_{i}\right) f\left(x_{i}\right).
\label{curbatureformula}
\end{equation}
This result opens the possibility for using Optimal Quantizer expectation \(\expect{f(\xlambdaqn)}\) as an approximation for the true expectation. As a deterministic characterization of \(\xlambda\), equation \ref{curbatureformula} can be compared to its counterpart when one considers \gls*{qmc} sampling with \(X_{\gls*{qmc}}^{\lambda}\) obtained from evaluating a low discrepancy sequence \(\left\{\mathbf{u}_{1}, \cdots, \mathbf{u}_{N}\right\}\) with the inverse cumulative function of distribution \(\xlambda\). It results in a similar curbature formula but with equal normalized weights. This method typically produces an absolute error in \(\bigO(\frac{\log(N)}{N})\). By considering relevant weights on each sample, the optimal quantization improves the estimation by a factor \(\log(N)\).

\paragraph{Regularity.} The precision of the approximation improves with regularity hypothesis. For instance, let \(\alpha \in [0,1], \eta \geq 0\), if \(F\) is continuously differentiable on \(\mathbb{R}^{d}\) with \(\alpha\text{-Hölder}\) gradient and \(X \in L_{\mathbb{R}^{d}}^{2+\eta}(\mathbb{P})\), one has the following bound on the \acrlong*{ae} \cite{pagesIntroductionVectorQuantization2015}
\begin{equation}
\left|\mathbb{E} F(X)-\mathbb{E} F\left(\widehat{X}_{N}^{\Gamma}\right)\right| \leq C_{d, \mu}[\nabla F]_{\alpha} N^{-\frac{1+\alpha}{d}}.
\label{zador}
\end{equation}
\paragraph{Getting Optimal Quantization.} The main drawback of Optimal Quantization is the computational cost associated with constructing an optimal N-quantizer \(\xlambdaqn\) compared to sampling from \(\xlambda\). Even though it is time-consuming in higher dimensions, one must keep in mind that it can be built offline and that efficient methods exist to approximate the optimal quantizer. For instance, K-means are used to obtain such grid at a reasonable cost of \(\bigO(N \log N)\) \cite{gershoVectorQuantizationSignal1991}. Moreover, in the context of \acrshort*{avi} with normal approximation, it is possible to rely solely on \(D\) dimensional normal grid to perform optimization since every normal distribution can be obtained by shifting and scaling. The same goes for every distribution that can be determined by such transformation of a base random variable. Note that the optimal grid for the normal distribution can be downloaded for dimensions up to \(10\) (\url{http://www.quantize.maths-fi.com/downloads}).

\begin{figure}
\centering
        \qalgo
\end{figure}
\paragraph{Quantized Variational Inference.}
The curbature formula \ref{curbatureformula} is used to compute the \acrshort*{oq} expectation at a similar cost than regular \acrshort*{mc} estimation. Replacing the \acrshort*{mc} term in equation \ref{elbograd} by its quantized counterpart is straightforward. The quantized \acrshort*{elbo} estimator is defined by
\begin{equation}
\qelbolambda=\sum_{i=1}^{N} \omega_i H\left(X^{\Gamma,\lambda}_i\right).
\end{equation}
A crucial point is that the quantized \acrshort*{elbo} is always lower than the expected one under the assumption of convex \acrshort*{elbo} objective. This particular point justifies the usefullness of the method for quick evalutation of model performance.
\begin{proposition}
Let $\xlambda \in L_{\mathbb{R}^{d}}^{2}(\Omega, \mathcal{A}, \mathbb{P})$ and $\xlambdaqn$ the associated optimal quantizer,  under the hypothesis that H (Eq. \ref{elbograd}) is a convex lipschitz function, 
\begin{equation}
\qelbolambda \leq \mathcal{L}(\lambda).
\end{equation}
\label{elboconvexcomparaison}
\end{proposition}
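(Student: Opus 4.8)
The plan is to exploit the defining $L^2$-optimality of the quantizer, which forces every grid point to be the centroid of $\xlambda$ over its Voronoi cell, and then to feed this into Jensen's inequality for the convex map $H$.

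First I would record the stationarity (self-consistency) property of a quadratically optimal quantizer. Because $\Gamma_N^{*}$ minimizes $\normL{\xlambda - \Pi(\xlambda)}$, the first-order optimality condition of the map $\Gamma \mapsto \normL{\xlambda-\Pi_\Gamma(\xlambda)}^2$ forces each optimal point to coincide with the conditional mean over its cell, $x_i^{*}=\expect{\xlambda \mid \xlambda \in C_i(\Gamma_N^{*})}$. Writing $\mathcal{F}=\sigma(\xlambdaqn)$ for the $\sigma$-algebra generated by the Voronoi partition, this is exactly the identity
\[
\xlambdaqn=\expect{\xlambda \mid \mathcal{F}}.
\]
The Lipschitz hypothesis guarantees $|H(x)|\le |H(0)|+[H]_{\mathrm{Lip}}\,|x|$, so that $H(\xlambda),\,H(\xlambdaqn)\in L^1$ since $\xlambda\in L^2\subset L^1$ and $\xlambdaqn$ takes finitely many values; all expectations below are therefore well defined.

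Second I would apply the conditional Jensen inequality. As $H$ is convex in its first argument, conditioning on $\mathcal{F}$ and using the stationarity identity gives, almost surely,
\[
H(\xlambdaqn)=H\!\left(\expect{\xlambda \mid \mathcal{F}}\right)\le \expect{H(\xlambda)\mid \mathcal{F}}.
\]
Taking total expectations and invoking the tower property,
\[
\qelbolambda=\expect{H(\xlambdaqn)}\le \expect{\expect{H(\xlambda)\mid \mathcal{F}}}=\expect{H(\xlambda)}=\mathcal{L}(\lambda),
\]
which is the claim; the cubature formula \ref{curbatureformula} identifies the left-hand side with the weighted sum $\sum_i \omega_i H(X^{\Gamma,\lambda}_i)$, so no separate argument for the discrete representation is needed.

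The main obstacle is justifying the stationarity identity $\xlambdaqn=\expect{\xlambda \mid \mathcal{F}}$ rather than the Jensen step, which is routine once that identity is in hand. It is precisely the feature of $L^2$ optimality (as opposed to general $L^p$, $p\neq 2$) that yields this centroidal representation and hence makes a conditional-expectation—and therefore Jensen—argument available; I would cite the standard optimal-quantization references for the centroid property. A minor point to verify is measurability of the Voronoi cells, so that $\mathcal{F}$ is genuinely a $\sigma$-algebra: this holds because the cells are convex polytopes and their overlapping boundaries carry zero mass under the continuous law of $\xlambda$, so the projection $\Pi$ is defined $\mathbb{P}$-almost surely.
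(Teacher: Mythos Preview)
Your proof is correct and follows essentially the same route as the paper: invoke the stationarity identity $\xlambdaqn=\expect{\xlambda\mid\xlambdaqn}$ for the optimal quantizer, apply conditional Jensen to the convex $H$, and conclude via the tower property. You add welcome justification (integrability from the Lipschitz bound, measurability of the Voronoi partition, and the centroidal first-order condition behind stationarity), but the skeleton is identical to the paper's argument.
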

In fact, for proposition \ref{elboconvexcomparaison} to be true \(\xlambdaqn\) needs only to fulfill the stationnary property which is defined by \(\expect{\xlambda | \xlambdaqn}=\xlambdaqn\). Intuitively, the stationnary condition expresses the fact that the quantizer \(\xlambdaqn\) is the expected value under the subset of events \(\mathcal{C} \in \cal A\) such as \(\Pi(\xlambda) = \xlambdaqn\). It can be shown that the optimal quantizer has this property \cite{grafFoundationsQuantizationProbability2000,pagesSpaceQuantizationMethod1998} .\\
Computing the gradient in the same fashion leads to algorithm \ref{algoq}. An immediate consequence of proposition \ref{elboconvexcomparaison} is that for \(\olambdaq\) the optimal parameters estimated from algorithm \ref{algoq} and \(\olambda\) the true optimum, we can state the following proposition 
\begin{proposition}
Let $\olambda = \max\limits_{\lambda \in \rk} \elbolambda$ and $\olambdaq = \max\limits_{\lambda \in \rk} \qelbolambda$. Under the same assumptions than proposition \ref{elboconvexcomparaison},
\begin{equation}
\elboolambda  - \qelboolambdaq \leq C \left[ 2\normL{X^{\olambda} - X^{\Gamma, \olambda}} + \normL{X^{\olambdaq} - X^{\Gamma, \olambdaq}}\right].
\end{equation}
\label{elboerror}
\end{proposition}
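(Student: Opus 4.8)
The plan is to reduce the statement to a stability-of-\textit{argmax} argument for two deterministic functions of $\lambda$: the true objective $\elbo{\lambda}=\expect{H(\xlambda)}$ and its quantized surrogate $\qelbo{\lambda}=\expect{H(\xlambdaqn)}$, where the latter is exactly the cubature sum by equation \ref{curbatureformula}. I would then lean on only two structural ingredients. The first is the pair of optimality relations that define the two optima, namely $\elboolambdaq\le\elboolambda$ (since $\olambda$ maximizes $\mathcal{L}$) and $\qelboolambda\le\qelboolambdaq$ (since $\olambdaq$ maximizes the quantized objective). The second is a pointwise control of the quantization gap $\elbo{\lambda}-\qelbo{\lambda}$ by the $\lps{2}{d}$ quantization error evaluated at the \emph{same} parameter.

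For that pointwise control I would use the Lipschitz hypothesis on $H$ from proposition \ref{elboconvexcomparaison}: for any fixed $\lambda$,
\[
\abs{\elbo{\lambda}-\qelbo{\lambda}}=\abs{\expect{H(\xlambda)}-\expect{H(\xlambdaqn)}}\le [H]_{\mathrm{Lip}}\,\expect{\abs{\xlambda-\xlambdaqn}}\le [H]_{\mathrm{Lip}}\,\normL{\xlambda-\xlambdaqn},
\]
where the last step passes from the $L^1$ to the $L^2$ norm by Jensen (equivalently Cauchy--Schwarz). Setting $C=[H]_{\mathrm{Lip}}$ gives the constant appearing in the claim, and instantiating this at $\olambda$ and at $\olambdaq$ produces the two quantization-error terms $\normL{X^{\olambda}-X^{\Gamma,\olambda}}$ and $\normL{X^{\olambdaq}-X^{\Gamma,\olambdaq}}$.

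The assembly step is a telescoping decomposition. First I would split
\[
\elboolambda-\qelboolambdaq=\big(\elboolambda-\qelboolambda\big)+\big(\qelboolambda-\qelboolambdaq\big),
\]
bounding the first bracket by $C\normL{X^{\olambda}-X^{\Gamma,\olambda}}$ via the pointwise estimate. For the second bracket I would telescope through the true objective at both optima,
\[
\qelboolambda-\qelboolambdaq=\big(\qelboolambda-\elboolambda\big)+\big(\elboolambda-\elboolambdaq\big)+\big(\elboolambdaq-\qelboolambdaq\big),
\]
where the outer two brackets are again quantization gaps (magnitudes $\le C\normL{X^{\olambda}-X^{\Gamma,\olambda}}$ and $\le C\normL{X^{\olambdaq}-X^{\Gamma,\olambdaq}}$), while the middle objective gap $\elboolambda-\elboolambdaq\ge 0$ is discarded using optimality of $\olambda$. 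Collecting terms, the error at $\olambda$ is counted once in each split—hence the factor $2$—and the error at $\olambdaq$ once, which is precisely the stated bound.

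I expect the only real subtlety to be the sign bookkeeping: each optimality inequality must be oriented so that the objective-difference terms it governs ($\elboolambda-\elboolambdaq$ and $\qelboolambda-\qelboolambdaq$) are dropped with the correct sign, rather than accidentally retained. It is worth flagging that this bookkeeping is slightly wasteful: since $\qelboolambda-\qelboolambdaq\le 0$ outright, the first split already yields the sharper $\elboolambda-\qelboolambdaq\le C\normL{X^{\olambda}-X^{\Gamma,\olambda}}$, from which the stated inequality follows because the norms are nonnegative. In particular the literal constant $2$ comes from a triangle-inequality bound that does not exploit the one-sidedness $\qelbolambda\le\elbolambda$ furnished by proposition \ref{elboconvexcomparaison}; using that convexity fact one can tighten the factor. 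I would present the telescoping version to reproduce the exact constants in the statement while noting that Lipschitzness and the two optimality relations alone already suffice.
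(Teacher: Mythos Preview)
Your primary telescoping argument contains a sign error at the step where you ``discard'' $\elboolambda-\elboolambdaq$. You correctly note that this term is $\ge 0$ by optimality of $\olambda$, but dropping a nonnegative summand can only produce a \emph{lower} bound, not the upper bound you need. Indeed, the four-term expansion you propose collapses (the first two brackets cancel identically) to $(\elboolambda-\elboolambdaq)+(\elboolambdaq-\qelboolambdaq)$, and the first of these cannot be controlled by a single Lipschitz gap without routing back through the quantized objective and invoking $\qelboolambda\le\qelboolambdaq$.

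Your closing remark, however, is correct and is the cleanest route: since $\olambdaq$ maximizes $\widehat{\mathcal{L}}^N_{\acrshort*{oq}}$, one has $\qelboolambda-\qelboolambdaq\le 0$, so already
\[
\elboolambda-\qelboolambdaq\le\elboolambda-\qelboolambda\le C\,\normL{X^{\olambda}-X^{\Gamma,\olambda}},
\]
which is strictly sharper than the stated inequality. The paper does not take this shortcut; it uses the three-term telescope
\[
\elboolambda-\qelboolambdaq=(\elboolambda-\qelboolambda)+(\qelboolambda-\elboolambdaq)+(\elboolambdaq-\qelboolambdaq),
\]
bounds the two outer terms by the Lipschitz lemma (Lemma~\ref{lipineq}), and handles the mixed middle term by the replacement $\qelboolambda-\elboolambdaq\le\qelboolambda-\elboolambda$ (optimality of $\olambda$) followed by Lemma~\ref{lipineq} again at $\olambda$; this is exactly where the factor~$2$ on $\normL{X^{\olambda}-X^{\Gamma,\olambda}}$ comes from. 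That replacement step carries the same sign delicacy as your discarding step---it is valid when $\elboolambda\le\elboolambdaq$, which is why the paper's appendix states the optima as minima. For the $\max$ formulation you were given, your one-line alternative using $\qelboolambda\le\qelboolambdaq$ is the argument that works without caveat, so promote it from a side remark to the main proof.
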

The approximation error of the resulting estimation follows from the Zador theorem (Eq. \ref{zador}) and is in \(\bigO(N^{-\frac{2(1+\alpha)}{d}})\) in term of \acrshort*{mse} depending on the regularity of \(H\). The crucial implication of proposition \ref{elboerror} is that relative model performance can be evaluated with our method. Poor relative true performance, provided that the difference in terms of \acrshort*{elbo} minimum sufficiently large in regard of the approximation error, produces poor relative performance with Quantized Variational Inference.

Performing algorithm \ref{algoq} implies finding the new optimal quantizer for \(X^{\Gamma,\lambda_k}\) at each step \(k\). We highlight that the competitiveness of the method in terms of computational time is due to the fact that optimal quantizer derived from the base distribution \(\mathbb{P}_X\) can be used to obtain \(X^{\Gamma,\lambda}\) when \(X^{\lambda}\) can be obtained through scaling and shifting of \(X\), since optimal quantization is preserved under these operations. For instance, in the case of \acrshort*{bbvi} with Gaussian distribution, we only need the optimal grid \(X^\Gamma\) of \(\mathcal{N}(0,I_d)\) and use \(X^{\Gamma,\lambda} = \mu + X^\Gamma \Sigma^{\frac{1}{2}}\) (given \(\Sigma^{\frac{1}{2}}\) the Cholesky decomposition of \(\Sigma\)) to obtain the new optimal quantizer. The same goes for the distributions in the exponential family. Details about the optimal quantization for the gaussian case can be found in \cite{pagesOptimalQuadraticQuantization2003}. Thus, this method applies to a large class of commonly used variational distributions.

The previous results imply that quantization is relevant only for \(d < 2(1 + \alpha)\) compared to \acrshort*{mc} sampling. However, numerous empirical studies have shown that this bound may be overly pessimistic, even for a not so sparse class of function in \(L_{\mathbb{R}^{d}}^{2}\) \cite{pagesNumericalProbabilityIntroduction2018}. Going further, we can implement Richardson extrapolation to improve on this bound.
\subsection{Richardson Extrapolation}
\label{sec:org9bc4f71}
\label{sec:rqvi}
Richardson extrapolation \cite{richardsonIXApproximateArithmetical1911} was originally used for improving the precision of numerical integration. The extension to optimal quantization was first introduced in \cite{pagesNumericalProbabilityIntroduction2018,pagesMultistepRichardsonRombergExtrapolation2007} in the finance area to bring an answer to expensive computation of some expectation \(\expect{f(X_T)}\) for a diffusion process \(X_t\) representing a basket of assets and \(f\) an option with maturity \(T\). \\
Richardson extrapolation leverages the stationary property of an optimal quantizer through error expansion. We illustrate in the one-dimensional case.
Let \(H\) be twice differential function with lipschiptz continuous second derivative. By Taylor's expansion 
\begin{align*}
\expect{H(\xlambda)} & = \expect{H(\xlambdaqn)}+\expect{H^{\prime}(\xlambdaqn)(\xlambda-\xlambdaqn)} \\
 & + \expect{H^{\prime \prime}(\xlambdaqn)(\xlambda-\xlambdaqn)^{2}} + \bigO(\expect{|\xlambda-\xlambdaqn|^{3}}).
\end{align*}
Then, using the stationnary property, the first order term vanishes since 
\begin{align*}
\expect{(\xlambda-\xlambdaqn)}&= \expect{\expect{ (\xlambda-\xlambdaqn) | \xlambdaqn}}  \\ 
 &=  \expect{\expect{\xlambda | \xlambdaqn} - \xlambdaqn} \\ 
 &= 0.
\end{align*}
Taking two optimal quantizer \(\xlambdaq{N}\) and \(\xlambdaq{M}\) of \(X^\lambda\) at level \(N, M\) with \(N \geq M\) and using the fact that \(\expect{|\xlambda-\xlambdaqn|^{3}} =  \bigO(N^{-3})\) \cite{grafDistortionMismatchQuantization2008}, it is possible to eliminate the first order term by combining the two estimators with a factor \(N^2\) and \(M^2\).

\begin{equation}
\elbolambda = \frac{N^{2} \qelbonlambda{N} - M^{2} \qelbonlambda{M}}{N^{2}-M^{2}} + \bigO(N^{-1} \left(N^{2}-M^{2}\right)^{-1}).
\end{equation}

We generally take \(\frac{N}{M} = \gamma\) with \(\gamma \in [1,2]\) due to additional computational cost. For instance, taking \(N = 2M\) leads to \(\bigO(N^{-3})\) in term of absolute error. Recent results \cite{pagesNumericalProbabilityIntroduction2018,lemaireNewWeakError2019} in higher dimension show that the general error is \(\bigO(N^{-\frac{2}{d}} (N^\frac{2}{d} - M^\frac{2}{d})^{-1})\). 
Even though \(\gamma = 2\) led to satisfying results in our experiments, applying this method to \acrshort*{vi} can lead to computational instability in higher dimensions and there is no straightforward method for finding the optimal \(\gamma\).

\begin{figure}[!t]
\centering
\includegraphics[width=.9\linewidth]{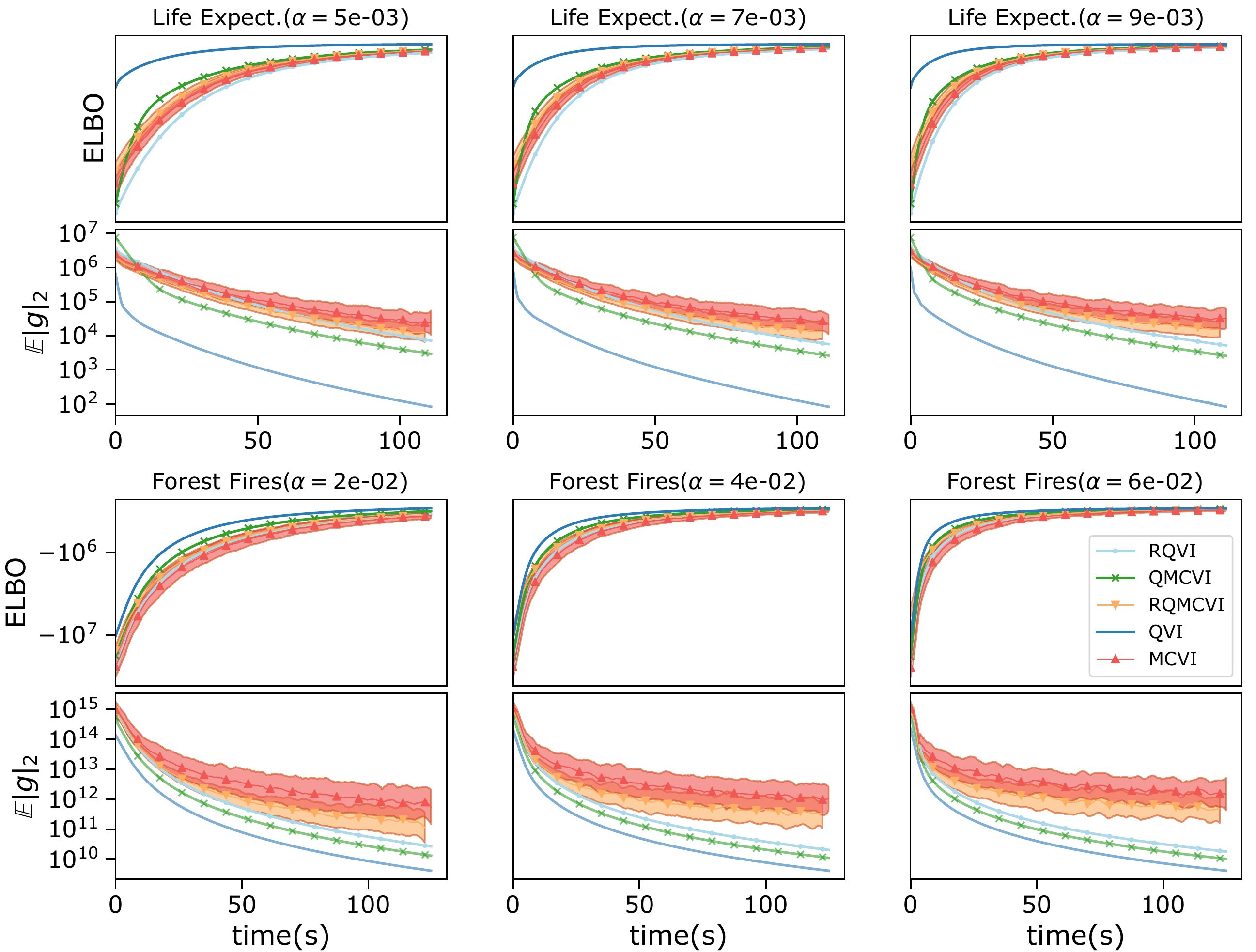}
\caption{\label{fig:glr}
\textbf{Bayesian Linear Regression}. Evolution of the \acrshort*{elbo} (odd rows, log scale) and expect gradient norm (even rows, log scale) during the optimization procedure for datasets reported in Table \ref{tabdataset} using Adam for \acrshort*{mcvi} (red), \acrshort*{rqmcvi} (orange), \acrshort*{qmcvi} (green), \acrshort*{qvi} (blue), \acrshort*{rqvi} (light blue) as function of time. Variance for \acrshort*{mc} estimator (red area) and \acrshort*{rqmc} (orange area) are obtained by \(20\) runs of each experiment.}
\end{figure}
\section{Experiments}
\label{sec:orgaa2a860}
\label{sec:experiments}

To demonstrate the validity and effectiveness of our approach, we considered \acrfull*{blr} on various dataset, a \acrfull*{glm} on the frisk data and a \acrfull*{bnn} on the metro dataset. For \(q_\lambda\), we choose the standard Mean-Field variational approximation with Gaussian distributions.
\paragraph{Setup.}
Experiments are performed using python 3.8 with the computational library Tensorflow \cite{abadiTensorFlowLargeScaleMachine2015}. Adam \cite{kingmaAdamMethodStochastic2015a}  optimizer is used with various learning rates \(\alpha\) and default \(\beta_1 = 0.9,\beta_2=0.999\) values recommended by the author. The benchmark algorithms comprises the traditionnal \acrshort*{mcvi} described in algorithm \ref{algomc}, \acrshort*{rqmc} considered in \cite{buchholzQuasiMonteCarloVariational2018} and \acrshort*{qmc}. We underline that \cite{buchholzQuasiMonteCarloVariational2018} shows that \acrshort*{rqmc} outperforms state of the art control variate techniques such as \acrfull*{hpv} \cite{NIPS2017_6961} in a similar setting. We compare it with the implementation of algorithm \ref{algoq} (\acrshort*{qvi}) and the Richardson extrapolation \acrshort*{rqvi}. For all experiments we take a sample size \(N=20\). When \(D \leq 10\), precomputed optimal quantizer available online \(\footnote{\url{http://www.quantize.maths-fi.com/downloads}}\) is used. The Optimal Quantization is approximated in higher dimension using the R package muHVT. 
The number of parameters \(K\) along with the number of samples for each dataset is reported in Table \ref{tabdataset}. The complete documented source code to reproduce all experiments is available on GitHub \(\footnote{\url{https://github.com/amirdib/quantized-variational-inference}}\).

\begin{table}[H]
\caption{Datasets used for the experiments along with the Relative Bias (RB) at the end of execution for \acrshort*{qvi} and \acrshort*{rqvi} using the best learning rate.}
\centering
\begin{tabular}{ccccc}
\text { Dataset } &  \text { Size } & \text { K } & \text { QVI RB } & \text { RQVI RB }  \\
\hline
\text {Boston } &  506 & 18 & \textbf{13\%} & 7\% \\
\text {Fires } &  517 & 16 & \textbf{3\%} & 1\% \\
\text {Life Expect. } &  2938 & 36  & \textbf{0.3\%} & 0.04\% \\
\text {Frisk} &  96 & 70 & \textbf{6\%} &  \\
\text {Metro} &  48204 & 60 & \textbf{5\%} &  \\
\end{tabular}
\label{tabdataset}
\end{table}

\begin{figure}[!t]
\centering
\includegraphics[width=.9\linewidth]{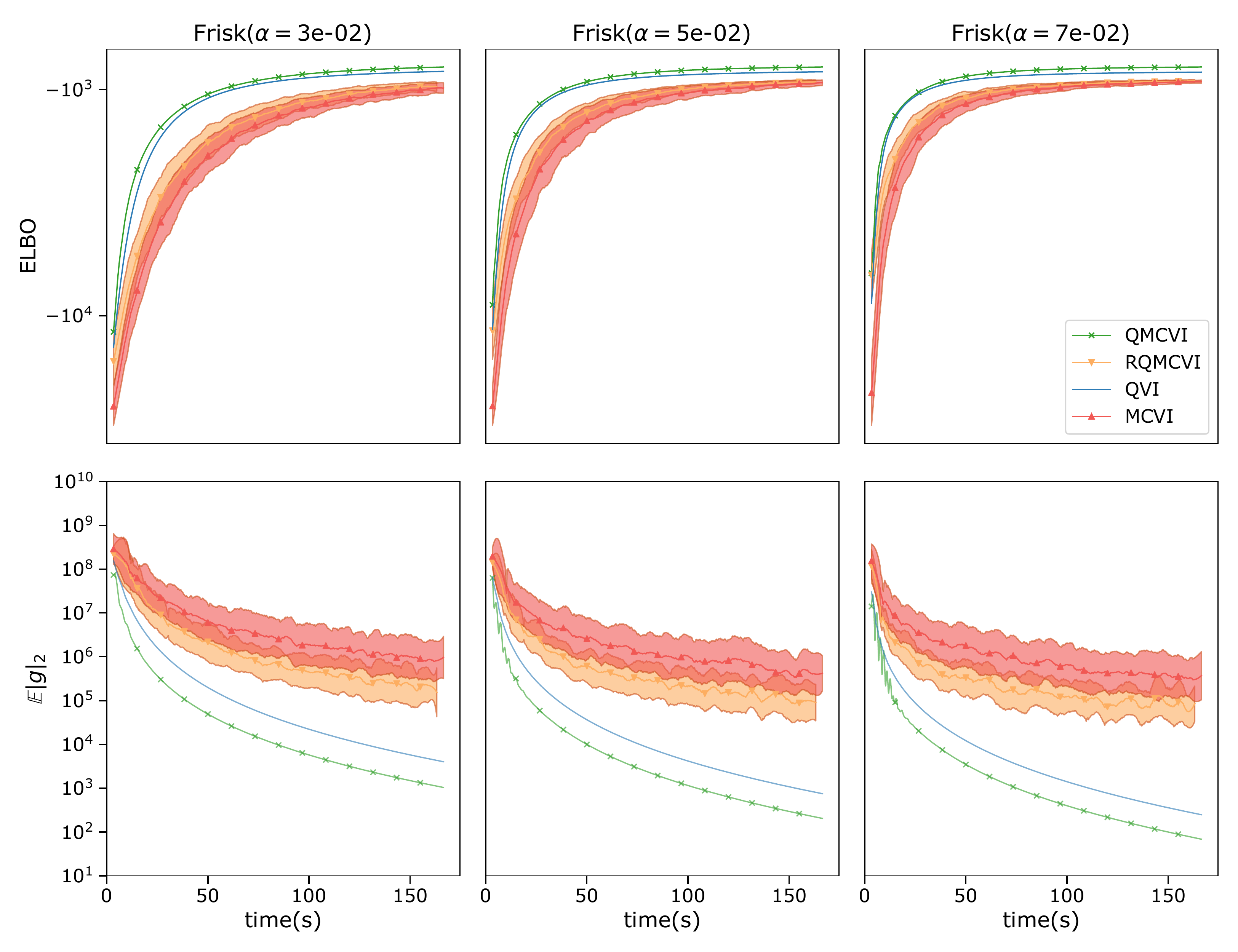}
\caption{\label{fig:frisk}
\textbf{Generalized Poisson Regression}. Evolution of the \acrshort*{elbo} in (first row, in log scale) and expect gradient norm (second row, in log scale) during the optimization procedure for the frisk datasets (see Table \ref{tabdataset}) using Adam for \acrshort*{mcvi} (red), \acrshort*{rqmcvi} (orange), \acrshort*{qmcvi} (green), \acrshort*{qvi} (blue) as function of time. \acrshort*{qvi} exhibits comparable performance to \acrshort*{qmcvi} for all selected learning rate \(\alpha\). We use \(N=20\) sample for each experiments. Using \acrshort*{qvi} produces a relative bias of \(6\%\).}
\end{figure}

\begin{figure}[!t]
\centering
\includegraphics[width=.9\linewidth]{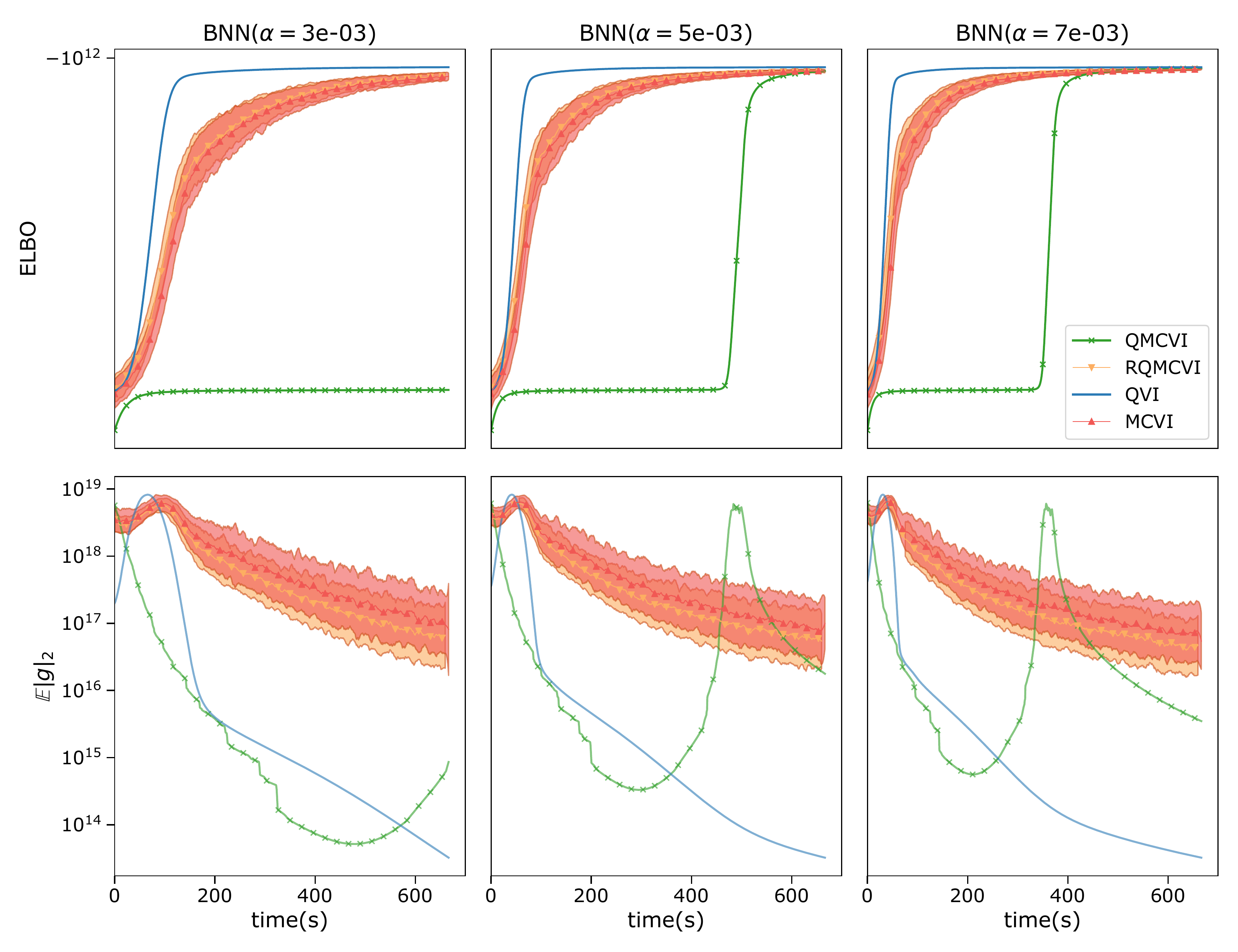}
\caption{\label{fig:bnn}
\textbf{Bayesian Neural Network}. Evolution of the \acrshort*{elbo} in (first row, in log scale) and expect gradient norm (second row, in log scale) during the optimization procedure for the metro datasets (see Table \ref{tabdataset}) using Adam for \acrshort*{mcvi} (red), \acrshort*{rqmcvi} (orange), \acrshort*{qmcvi} (green), \acrshort*{qvi} (blue) as function of time. \acrshort*{qvi} exhibits superior performance with all selected learning rate \(\alpha\). We use \(N=20\) sample for each experiments. Using \acrshort*{qvi} produces a relative bias of \(10\%\).}
\end{figure}

\paragraph{Bayesian Linear Regression.}
Figure \ref{fig:glr} shows the evolution of the \acrshort*{elbo} along with the expected \(\ell_{2}\) norm of the gradient \(\mathbb{E} | g |_{\ell_{2}}^{2}\), both in log-scale. We see that \acrshort*{qvi} converges faster than vanilla \acrshort*{mcvi} and the baseline on all datasets. The gradient of both \acrshort*{qvi} and \acrshort*{rqvi} is lower than \acrshort*{mcvi} thanks to the absence of variance. However, only \acrshort*{qvi} performs better than \acrshort*{mcvi} on all datasets. For all learning rates \(\alpha\) considered, the expected norm of the gradient is significatively lower. In these examples, it appears that the gain obtained from using \acrshort*{rqvi} is lost in the additional computation required for this method. We observe that using \acrshort*{rqmc} sampling reduces the gradient variance (odd rows) and improves the convergence rate for all experiments.  \\
In these experiments, the resulting bias after performing a complete Gradient Descent is relatively small compared to the starting value of the \acrshort*{elbo}. The resulting biases are reported in Table \ref{tabdataset} and span from almost \(0\) for the Life Expectancy dataset to \(13 \%\) for the Boston dataset. The fact that \(\qelbonlambda{N} > \elbolambda\) is a consequence of proposition \ref{elboerror}.

\paragraph{Poisson Generalized Linear Model.}
Similar results are obtained by \acrshort*{qvi} for the \acrshort*{glm} model on Frisk dataset (see Figure \ref{fig:frisk}). \acrshort*{qmcvi} perform similarly to \acrshort*{qvi} for all learning rates but produces a larger bias in the ELBO objective function.
As mentionned, \acrshort*{rqvi} can be computationnaly instable as the dimension grows. Indeed, denoting \(\gamma = \frac{N}{M}\) and \(\epsilon =\frac{2}{D}\), computing \acrshort*{elbo} with Richardson extrapolation leads to 
\begin{equation}
\widehat{\mathcal{L}}(\lambda) = \frac{\gamma^\epsilon \qelbonlambda{N} - \qelbonlambda{M}}{\gamma^\epsilon-1}.
\end{equation}
For large \(D\), even a small computational error between \(\qelbonlambda{N}\) and \(\qelbonlambda{M}\) can produce a large error in the estimation of \(\widehat{\mathcal{L}}(\lambda)\) which led to the failure of the procedure. 

\paragraph{Bayesian Neural Network.}
Finally, \acrlong*{bnn} model is tested against the baseline. It consists of a \acrlong*{mlp} composed of \(30\) ReLu activated neurons with normal prior on weights and \(\operatorname{Gamma}\) hyperpriors on means and variances. Inference is performed on the metro dataset.
Similarly to the other experiments, Figure \ref{fig:bnn} shows that \acrshort*{qvi} converges faster than the baseline for all hyperparameters considered in only few epochs. Quantitatively, by taking \(\alpha=7\mathrm{e}{-3}\) we can see that a stopping rule on the evolution of the parameters \(\lambda_k\), the gradient descent procedure would terminate at \(t \approx 100\) seconds for \acrshort*{qvi} and \(t \approx 500\) (seconds) for the \acrshort*{mcvi} algorithm.

\newpage
\section{Conclusion}
\label{sec:org4ae0e80}
This work focuses on obtaining a variance-free estimator for the \acrshort*{elbo} maximization problem. To that end, we investigate the use of Optimal Quantization and show that it can lead to faster convergence. Moreover, we provide a theoretical guarantee on the bias and regarding its use as an evaluation tool for model selection. \\
The base \acrshort*{qvi} algorithm can be implemented with little effort in traditional \acrshort*{vi} optimization package as one only needs to replace \acrshort*{mc} estimation with a weighted sum.\\
Various extensions could be proposed, including a simple quantized control variate using the optimal quantized to reduce variance or Multi-step Richardson extrapolation \cite{frikhaMultistepRichardsonRomberg2015}. In addition, this method could be applied more broadly to any optimization scheme, where sampling has a central role, such as normalizing flow or Variational Autoencoder. We plan to consider it in future work. 
\newpage
\section{Broader Impact}
\label{sec:org4868f33}
Our work provides a method to speed up the convergence of any procedure involving the computation of an expectation on a large distribution class. Such case corresponds to a broad range of applications from probabilistic inference to pricing of financial products \cite{pagesNumericalProbabilityIntroduction2018}. More generally, we hope to introduce the concept of optimal quantizer to the machine learning community and to convince of the value of deterministic sampling in stochastic optimization procedures.

Reducing the computational cost associated with probabilistic inference allows considering a broader range of models and hyperparameters. Improving goodness of fit is the primary goal of any statistician and virtually impacts all aspects of social life where such domain is applied. For instance, we chose to consider the sensitive subject of the New York City Frisk and Search policy in the 1990s. In-depth analysis of the results shows that minority groups are excessively targeted by such measure even after controlling for precinct demographic and ethnic-specific crime participation \cite{gelmanAnalysisNewYork2007}. This study gave a strong statistical argument to be presented to the authorities for them to justify and amend their policies.
\newline
Even though environmental benefits could be argued, we do not believe that such benefits can be obtained through increased efficiency of a system due to the rebound effect.

In the paper, we stressed the benefit of using our approach to improve automated machine learning pipelines, which consider large classes of models to find the best fit. This process can remove the practitioner from the modeling process, overlook any ML model's inherent biases, and ignore possible critical errors in the prediction.  We strongly encourage practitioners to follow standard practices such as posterior predictive analysis and carefully examine the chosen model's underlying hypothesis.

\acksection
The author thanks Mathilde Mougeot for the invaluable insights and corrections. The author is grateful to the reviewers whose particularly relevant comments improved this work. 
This research was supported by the French National Railway Company (SNCF) and has been partially funded by the «Industrial Data Analytics and Machine Learning» Chair of ENS Paris-Saclay.

\appendix

\section{ELBO derivation}
\label{sec:org85ac83b}
Assumes that we have observations \(y\), latent variables \(z\) and a model \(p(y,z)\) with \(p\) the density fonction for the distribution \(y\). By Bayes' Theorem
\begin{align*}
p(z | y) & =\frac{p(y | z) p(z)}{p(y)}\\
         & =\frac{p(y | z) p(z)}{\int_{z} p(z, y)dz}.
\end{align*}
Using the definition of KL divergence, 
\begin{align*} 
\operatorname{KL}[q_{\lambda}(z) \| p(z | y)] & =\int_{z} q_{\lambda}(z) \log \frac{q_{\lambda}(z)}{p(z | y)} dz \\ 
&=-\int_{z} q_{\lambda}(z) \log \frac{p(z | y)}{q_{\lambda}(z)} dz \\ 
&=-\int_{z}  q_{\lambda}(z) \log \frac{p(z, y)}{q_{\lambda}(z)} dz +\int_{z} q_{\lambda}(z) \log p(y) dz \\ 
&=-\int_{z}  q_{\lambda}(z) \log \frac{p(z, y)}{q_{\lambda}(z)}dz +\log p(y) \int_{z}  q_{\lambda}(z) dz \\ 
&=-\elbolambda +\log p(y).
\end{align*}
Rearranging the terms gives equation (1).

\section{Proofs}
\label{sec:org29e75e0}
Let \(f(X) \in L_{\mathbb{R}^{d}}^{2}(\Omega, \mathscr{A}, \mathbb{P})\) and \(\xlambdaqn\) the the optimal quantizer of \(\xlambda\). The general framework of our study can be stated as estimating the quantity

\begin{equation}
I = \expect{f(X)}.
\end{equation}
We define the \(\acrshort*{mc}\) and \(\acrshort*{oq}\) estimators as
\begin{align}
I_{\acrshort*{mc}} & = \frac{1}{N} \sum_{i=1}^{N} f(X_i), \\
I_{\acrshort*{oq}} & = \sum_{i=1}^{N} \underbrace{\mathbb{P}\left(\xlambdaqn=x_{i}\right)}_{\omega_i} f\left(x_{i}\right).
\end{align}
It is direct to derive \(\left\|I - I_{\acrshort*{mc}}\right\|_{2} = \bigO(N^{-\frac{1}{2}})\). In the following we establish the approximation error for the \(I_{\acrshort*{oq}}\) estimator. \\

In this part we demonstrates proposition \ref{elboconvexcomparaison} and proposition \ref{elboerror}. The former is particularly important since it establishes an asymptomatic bound on the error produced by using \acrshort*{qvi}. When considering it along with proposition \ref{elboconvexcomparaison} justifies \acrshort*{qvi}, for ranking models with it will produce true ranking provided that the relative difference in \acrshort*{elbo} is lower than the quantization error. In the following we formally demonstrate such result (thorough investigation of optimal quantizer can be found in \cite{pagesNumericalProbabilityIntroduction2018,pagesIntroductionVectorQuantization2015}). We begin with the definition of a stationnary quantizer.
\begin{definition}
Let $\Gamma_N=\left\{x_{1}, \ldots, x_{N}\right\}$ be a quantization scheme of $\xlambda$. $\xlambdaqn$ is said to be stationary quantizer if the Voronoi partition induced by $\Gamma_N$ satisfies $\mathbb{P}\left(X \in C_{i}(x)\right)>0$ $\forall i \in \{1,\ldots,N\}$ and 
\begin{equation*}
\expect{\xlambda | \xlambdaqn} = \xlambdaqn. 
\end{equation*}
\label{stationnary}
\end{definition}
One of the first question raised by using optimal quantization \(\expect{H(\xlambdaqn)}\) in place for \(\expect{H(\xlambda)}\) is the error produced by such substitution. Let us remind that we denote \(\qelbolambda = \expect{H(\xlambdaqn)}\) the quantized \acrshort*{elbo} estimator and \(\mathcal{L}(\lambda) = \expect{H(X^{\lambda})}\) the true \acrshort*{elbo}. 
\begin{lemma}
Let $\xlambda \in L_{\mathbb{R}^{d}}^{2}(\Omega, \mathcal{A}, \mathbb{P})$ and a $H$ a continuous lipschitz function with Lipschitz constant C, we have
\begin{equation*}
\left| \elbolambda - \qelbolambda \right| 
\leq C \left\| \xlambda- \xlambdaqn \right\|_{2}.
\end{equation*}
\label{lipineq}
\end{lemma}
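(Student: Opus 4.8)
The plan is to reduce the inequality to a pointwise application of the Lipschitz bound, followed by a comparison of the $L^1$ and $L^2$ norms on the probability space; crucially, neither the optimality nor the stationarity of the quantizer will be needed, only the Lipschitz regularity of $H$. First I would write the difference of the two ELBO values as a single expectation, since by definition $\elbolambda = \expect{H(\xlambda)}$ and $\qelbolambda = \expect{H(\xlambdaqn)}$, and then move the absolute value inside by Jensen's inequality (equivalently, the triangle inequality for the integral):
$$\left| \elbolambda - \qelbolambda \right| = \left| \expect{H(\xlambda) - H(\xlambdaqn)} \right| \leq \expect{\left| H(\xlambda) - H(\xlambdaqn) \right|}.$$

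Next I would invoke the Lipschitz hypothesis on $H$ to bound the integrand pointwise by $C\,|\xlambda - \xlambdaqn|$, giving
$$\expect{\left| H(\xlambda) - H(\xlambdaqn) \right|} \leq C\, \expect{|\xlambda - \xlambdaqn|}.$$
The one remaining step is to pass from the $L^1$ quantity $\expect{|\xlambda - \xlambdaqn|}$ to the $L^2$ quantity $\normL{\xlambda - \xlambdaqn}$ that appears in the statement. On a probability space one has $\|\,\cdot\,\|_1 \leq \|\,\cdot\,\|_2$ --- a direct consequence of Jensen's inequality applied to the convex map $t \mapsto t^2$, or of Cauchy--Schwarz against the constant function $1$ --- and this is precisely the favorable direction, so $\expect{|\xlambda - \xlambdaqn|} \leq \normL{\xlambda - \xlambdaqn}$ and the claimed bound follows by chaining the three estimates.

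There is no genuine obstacle in this argument; it is essentially mechanical. The only point deserving attention is the final norm comparison, which is what legitimately lets us state the bound in terms of the $L^2$ quantization error --- the quantity that Zador's theorem (Eq.\ \ref{zador}) controls and that feeds directly into proposition \ref{elboerror}. I would also note, since the derivation never uses optimality or stationarity of $\xlambdaqn$, that the estimate holds verbatim for any quantizer at any level, which is exactly what is required when the bound is later applied to the two quantizers $\xlambdaq{N}$ and $\xlambdaq{M}$ of the Richardson extrapolation.
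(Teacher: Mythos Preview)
Your proof is correct and follows essentially the same route as the paper: Jensen (triangle inequality) to pull the absolute value inside the expectation, the Lipschitz bound pointwise, and then the monotonicity $\|\cdot\|_1 \leq \|\cdot\|_2$ on a probability space. The paper inserts an inner conditional expectation $\expect{\,\cdot\mid \xlambdaqn}$ at the first step, but by the tower property this is the same bound you wrote, so there is no substantive difference.
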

\begin{proof} 
\begin{align}
\left|\expect{H(\xlambda)}-\expect{H(\xlambdaqn)} \right| & \leq \expect{\expect{ \left|H(\xlambda) - H(\xlambdaqn) \right| | \xlambdaqn}} \label{jensen} \\
                              & \leq C \left\| \xlambda- \xlambdaqn \right\|_{1} \nonumber \\
                              & \leq C \left\| \xlambda- \xlambdaqn \right\|_{2} \label{monoticity} .
\end{align}
We use Jensen inequality in equation \refeq{jensen} and the monoticity of the $L_{p}(\Omega, \mathcal{A}, \mathbb{P})$ norm as a function of $p$ in equation \ref{monoticity}.
\end{proof}
\begin{proposition}
Let $\xlambda \in L_{\mathbb{R}^{d}}^{2}(\Omega, \mathcal{A}, \mathbb{P})$ and $\xlambdaqn$ the associated optimal quantizer, under the hypothesis that H is a convex lipschitz function, 
\begin{equation*}
\qelbolambda \leq \mathcal{L}(\lambda).
\end{equation*}
\label{elboconvexcomparaison}
\end{proposition}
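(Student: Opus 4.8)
The plan is to exploit the stationary property of the optimal quantizer together with the conditional form of Jensen's inequality. The optimal quantizer $\xlambdaqn$ satisfies $\expect{\xlambda \mid \xlambdaqn} = \xlambdaqn$ (Definition \ref{stationnary}), a fact already invoked in the discussion following the proposition statement; this is the only structural property of the quantizer I will use, so the argument in fact holds for any stationary quantizer, not merely the quadratically optimal one.

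First I would write $\xlambdaqn = \expect{\xlambda \mid \xlambdaqn}$ and apply the conditional Jensen inequality. Since $H$ is convex, conditioning on the sub-$\sigma$-algebra generated by $\xlambdaqn$ gives $H\!\left(\expect{\xlambda \mid \xlambdaqn}\right) \leq \expect{H(\xlambda) \mid \xlambdaqn}$ almost surely. Substituting the stationary identity into the left-hand side yields $H(\xlambdaqn) \leq \expect{H(\xlambda) \mid \xlambdaqn}$ pointwise (a.s.).

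Next I would take the unconditional expectation of both sides and invoke the tower property $\expect{\expect{H(\xlambda) \mid \xlambdaqn}} = \expect{H(\xlambda)}$. This gives $\expect{H(\xlambdaqn)} \leq \expect{H(\xlambda)}$, that is $\qelbolambda \leq \elbolambda$, which is the claim. The Lipschitz hypothesis enters only to guarantee integrability: since $\xlambda \in \lps{2}{d}$ and $H$ is Lipschitz, both $H(\xlambda)$ and $H(\xlambdaqn)$ lie in $L^1$, so every conditional expectation above is well defined and the tower property applies.

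The main obstacle, such as it is, lies less in the short inequality chain than in correctly justifying the conditional Jensen step: one must verify that convexity transfers under conditional expectation and that $\xlambdaqn$ equals $\expect{\xlambda \mid \xlambdaqn}$ \emph{exactly} rather than merely approximately. This is precisely where stationarity (rather than optimality per se) is indispensable, and where the argument would fail for a generic non-stationary quantization scheme, since the crucial substitution of the stationary identity into the Jensen bound would no longer be available.
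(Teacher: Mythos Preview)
Your argument is correct and is essentially identical to the paper's own proof: both use the stationarity identity $\xlambdaqn = \expect{\xlambda \mid \xlambdaqn}$, apply the conditional Jensen inequality to $H$, and then take expectations with the tower property. Your additional remarks that Lipschitzness is needed only for integrability and that stationarity (rather than optimality) is the operative hypothesis are accurate refinements of the same proof.
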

\begin{proof}
\begin{align}
\qelbolambda & = \expect{H(\xlambdaqn)} \nonumber \\
         & = \expect{H\left(\expect{\xlambda|\xlambdaqn}\right)} \label{propstat} \\
         & \leq \expect{\expect{H(\xlambda)|\xlambdaqn}} \nonumber \\
         & = \expect{H(\xlambda)} \label{propjensen} \\ 
         & = \mathcal{L}(\lambda) \nonumber
\end{align}
When we used Lemma \ref{stationnary} in equation \refeq{propstat} and the conditional Jensen inequality to obtain \refeq{propjensen}. 
\end{proof}
\begin{proposition}
Let $\olambda = \min\limits_{\lambda \in \rk} \elbolambda$ and $\olambdaq = \min\limits_{\lambda \in \rk} \qelbolambda$. Under the same assumptions than proposition \ref{elboconvexcomparaison},
\begin{equation*}
\elboolambda  - \qelboolambdaq \leq C \left[ 2\normL{X^{\olambda} - X^{\Gamma, \olambda}} + \normL{X^{\olambdaq} - X^{\Gamma, \olambdaq}}\right].
\end{equation*}
\label{elboerror}
\end{proposition}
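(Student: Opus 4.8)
The plan is to bound the gap between the two optimal values by a telescoping decomposition that compares each optimum against the \emph{other} objective, and then to control every resulting term either with Lemma~\ref{lipineq} or with a defining optimality/sign property. Concretely, I would insert the cross evaluation $\qelboolambda = \qelbo{\olambda}$ as an intermediate quantity and split
\begin{equation*}
\elboolambda - \qelboolambdaq = \big[\elboolambda - \qelboolambda\big] + \big[\qelboolambda - \qelboolambdaq\big],
\end{equation*}
so that the first bracket is a fixed-parameter quantization gap at $\olambda$ and the second compares the \emph{same} quantized objective evaluated at the two distinct optima.

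First I would dispatch the fixed-parameter bracket directly. Since $H$ is convex and Lipschitz with constant $C$, Lemma~\ref{lipineq} applied at $\lambda = \olambda$ gives $|\elboolambda - \qelboolambda| \le C\,\normL{X^{\olambda} - X^{\Gamma,\olambda}}$, which already produces one copy of the $\olambda$-quantization error.

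The second bracket is where the argument is genuinely two-sided, and this is the step I expect to be the main obstacle: $\olambdaq$ is optimal for $\qelbolambda$ whereas $\olambda$ is optimal for $\elbolambda$, so the two cannot be compared through a single optimality relation. The trick is to route the comparison back through the true objective by inserting $\elboolambda$ and $\elboolambdaq$, giving
\begin{equation*}
\qelboolambda - \qelboolambdaq = \big[\qelboolambda - \elboolambda\big] + \big[\elboolambda - \elboolambdaq\big] + \big[\elboolambdaq - \qelboolambdaq\big].
\end{equation*}
Here the middle difference $\elboolambda - \elboolambdaq$ is nonpositive by the defining optimality of $\olambda$ and can be discarded, while the two outer differences are again fixed-parameter quantization gaps controlled by Lemma~\ref{lipineq}: the first by $C\,\normL{X^{\olambda} - X^{\Gamma,\olambda}}$ and the third by $C\,\normL{X^{\olambdaq} - X^{\Gamma,\olambdaq}}$. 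This yields the \emph{second} copy of the $\olambda$-error (hence the factor $2$) together with the single $\olambdaq$-error, and the obstacle is precisely that one is forced to pay the $\olambda$-quantization error twice in order to detour through $\elbolambda$ and unlock the optimality of $\olambda$.

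Summing the two brackets then gives exactly $\elboolambda - \qelboolambdaq \le C\big[2\normL{X^{\olambda} - X^{\Gamma,\olambda}} + \normL{X^{\olambdaq} - X^{\Gamma,\olambdaq}}\big]$. For orientation on the sign, Proposition~\ref{elboconvexcomparaison} guarantees $\qelbolambda \le \elbolambda$ pointwise, which together with the optimality of $\olambdaq$ for the quantized objective forces the left-hand side to be nonnegative, so the bound is meaningful; the remaining work is the routine application of Lemma~\ref{lipineq} at the two parameter values, with no further estimates required.
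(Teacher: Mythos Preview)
Your proof is correct and follows essentially the same route as the paper: a telescoping decomposition through the cross evaluations $\qelboolambda$ and $\elboolambdaq$, with Lemma~\ref{lipineq} controlling the fixed-parameter quantization gaps and the optimality of $\olambda$ absorbing the remaining cross term. The only cosmetic difference is that the paper writes the three-term telescope $[\elboolambda - \qelboolambda] + [\qelboolambda - \elboolambdaq] + [\elboolambdaq - \qelboolambdaq]$ in one shot and then bounds the middle piece via $\qelboolambda - \elboolambdaq \le \qelboolambda - \elboolambda$, whereas you split in two stages; the ingredients and the resulting constants are identical.
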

\begin{proof}
A immediate consequence of proposition \ref{elboconvexcomparaison} is that $\qelboolambdaq \leq \elboolambda$. Then, we can write
\begin{align}
 \elboolambda - \qelboolambdaq & =  \elboolambda - \qelboolambda \nonumber  \\
                               & + \qelboolambda - \elboolambdaq \nonumber \\
                               & + \elboolambdaq - \qelboolambdaq \nonumber \\
                   & \leq  C\normL{X^{\olambda} - X^{\Gamma, \olambda}} \nonumber \\
                   & + C\normL{X^{\olambdaq} - X^{\Gamma, \olambdaq}} \nonumber \\
                   & +  C\normL{X^{\olambda} - X^{\Gamma, \olambda}} \nonumber 
\end{align}
Using Lemma \ref{lipineq} and noting that 
\begin{equation*}
\qelboolambda - \elboolambdaq \leq \qelboolambda - \elboolambda,
\end{equation*}
proposition \ref{elboerror} follows. 
\end{proof}
Finally, Zador's theorem is used to derive non-asymptotic bound (see \cite{luschgyFunctionalQuantizationRate2008} for a complete proof).
\begin{theorem}[Zador's Theorem]
Let $\xlambda \in L_{\mathbb{R}^{d}}^{2}(\Omega, \mathcal{A}, \mathbb{P})$ and $\xlambdaqn$ the associated optimal quantizer at level $N$, there exists a real constant $C_{d,p}$ such that 
\begin{equation*}
\forall N \geq 1, \quad    \left\|X-\widehat{X}^{\Gamma_{x}} \right\|_{p} \leq C_{d,p} N^{-\frac{1}{d}}
\end{equation*}
\end{theorem}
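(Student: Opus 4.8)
The plan is to prove the bound by exhibiting an \emph{explicit} (not necessarily optimal) quantizer of size $N$ that already achieves the rate $N^{-1/d}$. Since the optimal quantizer minimizes $\|X - \widehat{X}^{\Gamma_N}\|_p$ over all grids of cardinality at most $N$, any such construction furnishes an upper bound on the optimal error, and the theorem follows. I would organize the argument in three stages, from the simplest distribution to the general case. The first stage treats the uniform law on the unit cube $[0,1]^d$: choosing $\Gamma_N$ to be the centers of a regular product grid with $N = m^d$ cells of side $1/m$, every point lies within distance $\tfrac{\sqrt{d}}{2m}$ of its cell center, so that $\|X - \widehat{X}^{\Gamma_N}\|_p \leq \tfrac{\sqrt{d}}{2}\, m^{-1} = \tfrac{\sqrt{d}}{2}\, N^{-1/d}$. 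This settles the base case with an explicit constant.

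Next I would pass to an arbitrary compactly supported law. Projecting the support onto each coordinate axis shows it fits inside a cube of side $\Delta = \mathrm{diam}(\mathrm{supp}\,X)$; transporting the cube grid of the first stage under the corresponding affine rescaling (optimal quantization is preserved under translation and scaling, the very property the paper exploits for Gaussian families in Section \ref{sec:qvi}) then yields $\|X - \widehat{X}^{\Gamma_N}\|_p \leq \tfrac{\sqrt{d}}{2}\,\Delta\, N^{-1/d}$. The error simply scales linearly with the diameter, so no new idea is required here.

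The third and decisive stage is the passage to general, possibly unbounded, $X$, which is where the full strength of the theorem and the integrability hypothesis enter. Here I would decompose $\mathbb{R}^d$ into a countable family of hypercubes and distribute the $N$ available points among them, assigning to each cube a number of points $N_i$ proportional to a power of its probability mass. The correct exponent $d/(d+p)$ emerges from minimizing the total error under the budget constraint $\sum_i N_i \leq N$, and the resulting sum is closed by a H\"older inequality whose finiteness requires a moment strictly higher than $p$, i.e. $X \in L^{p+\delta}$ for some $\delta>0$; the mass beyond a growing truncation radius is then absorbed into a lower-order term using the same moment bound.

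The hard part will be this point-allocation step: balancing the number of quantization points in each region against its mass, and simultaneously against the tail decay, so that the local contributions sum to exactly $\bigO(N^{-1/d})$ with a finite constant. This is the technical core of Zador's theorem (Pierce's lemma), and it is the only place the integrability assumption is genuinely used. I note, however, that for the compactly supported or truncated variational families that arise in practice, the first two stages already deliver the stated bound with the explicit constant $C_{d,p} = \tfrac{\sqrt{d}}{2}\,\mathrm{diam}(\mathrm{supp}\,X)$, so the delicate allocation is needed only to cover heavy-tailed targets.
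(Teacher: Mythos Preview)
The paper does not actually prove Zador's Theorem: it states the result and refers the reader to \cite{luschgyFunctionalQuantizationRate2008} for a complete proof, so there is no in-paper argument to compare against.

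On its own merits, your three-stage outline is the standard route to the non-asymptotic Zador bound (Pierce's lemma), and it is essentially what one finds in the cited reference and in \cite{pagesNumericalProbabilityIntroduction2018}. Stage~1 (product grid on $[0,1]^d$) and Stage~2 (affine rescaling to a bounding cube) are correct and elementary; Stage~3 is rightly identified as the only substantive step. Covering $\mathbb{R}^d$ by unit cubes, allocating $N_i \propto p_i^{d/(d+p)}$ points to the cube of mass $p_i$, and closing the sum via H\"older is exactly how Pierce's lemma is proved, and your observation that this forces a moment strictly above $p$ (i.e.\ $X \in L^{p+\delta}$) is correct.

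Two small points. In Stage~1 you implicitly assume $N=m^d$; for general $N$ take $m=\lfloor N^{1/d}\rfloor$ and absorb the rounding into the constant, which is harmless for $N\geq 1$. More importantly, the moment requirement you uncover in Stage~3 is in mild tension with the paper's stated hypothesis $X\in L^2_{\mathbb{R}^d}$: that hypothesis suffices only for $p<2$, so the theorem as written in the paper is somewhat informal about the integrability needed for a general $L^p$ bound. Your proposal is more careful on this point than the paper itself.
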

Where \(C_{d,p}\) dependens only \(d\) and \(p\). This result can be vastly improved when \(H\) exhibits more regularity. For instance, if H is an \(\alpha\) hölderian function, we can obtain a bound in \(\bigO(N^{-\frac{1+\alpha}{d}})\) \cite{pagesIntroductionVectorQuantization2015}. 
\section{Experiments}
\label{sec:orgbfe84a2}
\paragraph{Bayesian Linear Regression.} We used three different real-world dataset, namely Forests Fire, Boston housing datasets from the UCI repository \cite{Dua:2019} and Life Expectancy dataset from the Global Health Observatory repository. The generative Bayesian Linear Gaussian Model used is as follow.
\begin{align*}
\mathbf{b}_{i} & \sim \mathcal{N}\left(\mu_{\beta}, \sigma_{\beta}\right),&\text{intercepts} \\ 
y_{i}  & \sim \mathcal{N}\left(\mathbf{x}_{i}^{\top} \mathbf{b}_{i}, \epsilon\right),& \text{output} 
\end{align*}
Let \(D\) be the dimension of the feature space. The dimension of the parameter space for a gaussian variationnal distribution under the mean-field assumption is \(K = 2D\).
\paragraph{Poisson Generalized Linear Model.}
The frisk dataset is a record of stops and searches practice on civilians in New York City for fifteen months in \(1998-1999\). It contains information about locations, ethnicity and crime statistics for each area. The question is whether these stops targeted particular groups after taking into account population and crime rates in each group for a particular precinct. \\
We can trace back the use of \acrlong*{glm} for this use case to  \cite{gelmanAnalysisNewYork2007}. The model writes as follow
\begin{align}
\mu & \sim \mathcal{N}\left(0,10^{2}\right) & \text{mean offset} \\ 
\log \sigma_{\alpha}^{2}, \log \sigma_{\beta}^{2} & \sim \mathcal{N}\left(0,10^{2}\right) &  \text{group variances} \\
\alpha_{e} & \sim \mathcal{N}\left(0, \sigma_{\alpha}^{2}\right) & \text{ethnicity effect} \\
\beta_{p} & \sim \mathcal{N}\left(0, \sigma_{\beta}^{2}\right) & \text{precinct effect} \\
\log \lambda_{e p} & = \mu+\alpha_{e}+\beta_{p}+\log N_{e p} & \text{log rate} \\
Y_{e p} & \sim \operatorname{Poisson}\left(\lambda_{e p}\right) & \text{stops events} \\
\end{align}
\(Y_{ep}\) denotes the number of frisk events for the ethnic group \(e\) in the precinct \(p\). \(N_{ep}\) is the number of arrests for the ethnic group \(e\) in the precinct \(p\). Hence, in this model, \(\alpha_e\) and \(\alpha_p\) represents the ethnicity and precinct effect. The dataset contains three ethnicities and thirty-two precinct, which therefore exhibits \(K=70\) variational 
parameters.

\paragraph{Bayesian Neural Network.} The \acrfull*{bnn} consists of a \acrfull{mlp} \(\psi\) of \(30\) ReLU activated neurons with normal prior weights and inverse Gamma hyperprior on the mean and variance. Regression is performed on the metro dataset. 

\begin{align}
\alpha & \sim \operatorname{Gamma}(1,0.1) & \text{weights hyper prior} \\ 
\tau & \sim \operatorname{Gamma}(1,0.1) &  \text{group variances} \\
w & \sim \mathcal{N}\left(0,\frac{1}{\alpha} \right), &  \text{neural network weights} \\
y  & \sim \mathcal{N}\left(\psi(w,x), \frac{1}{\tau}\right) & \text{output} 
\end{align}

\section{Thanks to open source libraries}
\label{sec:org3302531}
This work and many others would have been impossible without free, open-source computational frameworks and libraries. We particularly acknowledge Python 3 \cite{10.5555/1593511}, Tensorflow \cite{abadiTensorFlowLargeScaleMachine2015}, Numpy \cite{oliphant2006guide} and Matplotlib \cite{hunter2007matplotlib}.

\printbibliography
\end{document}